\def\qed{\hfill{}\ensuremath{\Box}}
\newtheorem{theorem}{Theorem}
\newtheorem{lemma}{Lemma}
\newtheorem{definition}{Definition}
\newtheorem{proof}{Proof}
\newtheorem{example}{Example}
\newenvironment{appproof}[2]%
   {\par\noindent{\bf Proof of #1}.\quad #2}%
   {\qed}
\newenvironment{appsketch}[2]%
   {\par\noindent{\bf Proof of #1} (\emph{Sketch\/}).\quad #2}%
   {\qed}
\newlength{\phantomlength}
\def\ba{\begin{array}}
\def\ea{\end{array}}
\def\bitem{\vspace{0ex}\begin{itemize}}
\def\eitem{\end{itemize}\vspace{0ex}}
\newcommand{\naf}{not~}
\newcommand{\utimes}[0]{\otimes}
\newcommand{\snot}[0]{\naf}
\newcommand{\st}[0]{s.t.\ }
\newcommand{\iec}[0]{i.e.,\ }
\newcommand{\egc}[0]{e.g.,\ }
\newcommand{\commadots}[0]{,\ldots ,}
\def\nota#1{\marginpar{#1}}
\long\def\CD#1{{\textcolor{black}{#1}}}
\long\def\JM#1{{\textcolor{blue}{#1}}}
\begin{document}
 \nocopyright
\pagenumbering{gobble}
%
\title{Generalizing Modular Logic Programs \thanks{The work of Jo\~ao Moura was supported by the grant SFRH/BD/69006/2010 from Funda\c{c}\~ao para a Ci\^encia e Tecnologia (FCT) from the Portuguese MEC - Minist\'erio do Ensino e da Ci\^encia. He would also like to thank Carlos Dam\'asio for his important contribution as well as the anonymous reviewers.}
}
\author{Jo\~ao Moura and Carlos Viegas Dam\'asio\\
CENTRIA - Centre for Artificial Intelligence\\
Universidade Nova de Lisboa, Portugal}


\newenvironment{myprogram}[2]{{\begin{figure}\label{#1}}\par}{\vspace{-.3in}{\end{figure}}}

\pagestyle{plain}
\setcounter{secnumdepth}{2} 
\setcounter{tocdepth}{2} 

\maketitle

\def\mathrlap{\mathpalette\mathrlapinternal}%
\def\mathrlapinternal#1#2{\rlap{$\mathsurround=0pt#1{#2}$}}%

\makeatletter
\newdimen\@mydimen%
\newdimen\@myHeightOfBar%
\settoheight{\@myHeightOfBar}{$|$}%
\newcommand{\SetScaleFactor}[1]{%
    \settoheight{\@mydimen}{#1}%
    \pgfmathsetmacro{\scaleFactor}{\@mydimen/\@myHeightOfBar}%
}%

\newcommand*{\Scale}[2][3]{\scalebox{#1}{\ensuremath{#2}}}%

\newcommand*{\nct}[3]{%
    \SetScaleFactor{\vphantom{\ensuremath{#1#2}}}
    #1%
\mathrel{\Scale[\scaleFactor]{|\mathrlap{\kern-0.48ex\sim}\hphantom{\kern-0.41ex\sim}}_#3}%
    #2%
}%

\begin{abstract}
Even though modularity has been studied extensively in conventional logic programming, there are few approaches on how to incorporate modularity into Answer Set Programming, a prominent rule-based declarative programming paradigm. A major approach is Oikarinnen and Janhunen's Gaifman-Shapiro-style architecture of program modules, which provides the composition of program modules. Their module theorem properly strengthens Lifschitz and Turner's splitting set theorem for normal logic programs. However, this approach is limited by module conditions that are imposed in order to ensure the compatibility of their module system with the stable model semantics, namely forcing output signatures of composing modules to be disjoint and disallowing positive cyclic dependencies between different modules. These conditions turn out to be too restrictive in practice and in this paper we discuss alternative ways of lift both restrictions independently, effectively solving the first, widening the applicability of this framework and the scope of the module theorem. 
\end{abstract}

\section{Introduction} 

Over the last few years, answer set programming (ASP) \cite{Eiter01computingpreferred,baral2003,570391,Marek99stablemodels,Niemela98logicprograms} emerged as one of the most important methods for declarative knowledge representation and reasoning. Despite its declarative nature, developing ASP programs resembles conventional programming: one often writes a series of gradually improving programs for solving a particular problem, \egc optimizing execution time and space. Until recently, ASP programs were considered as integral entities, which becomes problematic as programs become more complex, and their instances grow.
Even though modularity is extensively studied in logic programming, there are only a few approaches on how to incorporate it into ASP~\cite{GSPOPL89,OJtplp08,defk2009-iclp,journals/tplp/BabbL12} or other module-based constraint modeling frameworks~\cite{JarvisaloOJN:LPNMR09,Tasharrofi:2011:SAM:2050784.2050804}. The research on modular systems of logic program has followed two main-streams~\cite{DBLP:journals/jlp/BugliesiLM94}. One is programming in-the-large where compositional operators are defined in order to combine different modules, \egc~\cite{DBLP:conf/iclp/MancarellaP88,GSPOPL89,DBLP:conf/slp/OKeefe85a}. These operators allow combining programs algebraically, which does not require an extension of the theory of logic programs. The other direction is programming-in-the-small, \egc~\cite{Giordano199459,Miller86atheory}, aiming at enhancing logic programming with scoping and abstraction mechanisms available in other programming paradigms. This approach requires the introduction of new logical connectives in an extended logical language. The two mainstreams are thus quite divergent.

The approach of~\cite{OJtplp08} defines modules as structures specified by a program (knowledge rules) and by an interface defined by input and output atoms which for a single module are, naturally, disjoint. The authors also provide a module theorem capturing the compositionality of their module composition operator. However, two conditions are imposed: there cannot be positive cyclic dependencies between modules and there cannot be common output atoms in the modules being combined. Both introduce serious limitations, particularly in applications requiring integration of knowledge from different sources. The techniques used in~\cite{defk2009-iclp} for handling positive cycles among modules are shown not to be adaptable  for the setting of~\cite{OJtplp08}. 

In this paper we discuss two alternative solutions to the common outputs problem, generalizing the module theorem by allowing common output atoms in the interfaces of the modules being composed.  A use case for this requirement can be found in the following example.
\begin{example}\label{ex:usecase} Alice wants to buy a car, wanting it to be safe and not expensive; she preselected 3 cars, namely $c_1$, $c_2$ and $c_3$\@. Her friend Bob says that car $c_2$ is expensive, while Charlie says that car $c_3$ is expensive. Meanwhile, she consulted two car magazines reviewing all three cars. The first considered $c_1$ safe and  the second considered $c_1$ to be safe while saying that $c_3$ may be safe. 
Alice is very picky regarding safety, and so she seeks some kind of agreement between the reviews.

The described situation can be captured with five modules, one for Alice, other three for her friends, and another for each magazine. Alice should conclude that $c_1$ is safe since both magazines agree on this. Therefore, one would expect Alice to opt for car $c_1$ since it is not expensive, and it is reviewed as being safe. However, the current state-of-the-art does not provide any way of combining these modules since they share common output atoms.
$\hfill \blacksquare$\end{example}

In summary, the fundamental results of~\cite{OJtplp08} require a syntactic operation to combine modules -- basically corresponding to the union of programs --, and a compositional semantic operation joining the models of the modules. The module theorem states that the models of the combined modules can be obtained by applying the semantics of the natural join operation to the original models of the modules -- which is compositional. 

The authors show however that allowing common outputs destroys this property. There are two alternatives to pursue:

{\bf (1) Keep the syntactic operation:} use the union of programs to syntactically combine modules, plus some bookkeeping of the interface, and thus the semantic operation on models has to be changed;

{\bf (2) Keep the semantic operation:} the semantic operation is the natural join of models, and thus a new syntactic operation is required to guarantee compositionality.

Both  will be explored in this paper as they correspond to different and sensible ways of combining two sources of information, already identified in Example~\ref{ex:usecase}: the first alternative is necessary for Alice to determine if a car is expensive; the second alternative captures the way Alice determines whether a car is safe or not. Keeping the syntactic operation is shown to be impossible since models do not convey enough information to obtain compositionality. We present a solution to this problem based on a transformation that introduces the required extra information. The second solution is possible, and builds on the previous module transformation. 

This paper proceeds in Section 2 with an overview of the modular logic programming paradigm, identifying some of its shortcomings. In Section 3 we discuss alternative methods for lifting the restriction that disallows positive cyclic dependencies, and in Section 4 introduce two new forms of composing modules allowing common outputs, one keeping the original syntactic $union$ operator and the other keeping the original semantic model $join$ operator. 
We finish with conclusions and a general discussion.

\section{Modularity in Answer Set Programming}
Modular aspects of Answer Set Programming have been clarified in recent years, with authors describing how and when two program parts (modules) can be composed~\cite{OJtplp08,defk2009-iclp,JarvisaloOJN:LPNMR09} under the stable model semantics. 
In this paper, we will make use of Oikarinen and Janhunen's  logic program modules defined in analogy to \cite{GSPOPL89} which we review after presenting the syntax of answer set programs.
\subsection{Answer set programming paradigm}

Logic programs in the answer set programming paradigm are formed by finite sets of rules $r$ having the following syntax:
$$ L_1\leftarrow L_{2}, \ldots, L_m, \snot  L_{m+1}, \ldots, \snot  L_n. \;\;(n \geq m \geq 0) {\bf(1)}$$ where each $L_i$ is a logical atom without the occurrence of function symbols -- arguments are either variables or constants of the logical alphabet. 

Considering a rule of the form {\bf (1)}, let $Head_P(r)=L_1$ be the literal in the head, $Body^+_P(r)=\{L_{2} \commadots L_m\}$ be the set with all positive literals in the body, $Body^-_P(r)=\{L_{m+1} \commadots L_n\}$ be the set containing all negative literals in the body, and
$Body_P(r)=\{L_{2} \commadots L_n\}$ be the set containing all literals in the body.
If a program is positive we will omit the superscript in $Body^+_P(r)$. Also, if the context is clear we will omit the subscript mentioning the program and write simply $Head(r)$ and $Body(r)$ as well as the argument mentioning the rule. 

The semantics of stable models is defined via the reduct operation~\cite{Gelfond88thestable}. Given an interpretation $M$ (a set of ground atoms), the reduct $P^M$ of a program $P$ with respect to $M$ is the program $$P^M = \{ Head(r) \leftarrow Body^+(r) \mid r \in P, Body^-(r) \cap M = \emptyset \}.$$ 
The interpretation $M$ is a stable model of $P$ iff $M = LM(P^M)$, where $LM(P^M)$ is the least model of program $P^M$\@. 

The syntax of logic programs has been extended with other constructs, namely weighted and choice rules~\cite{Niemela98logicprograms}. 
In particular, choice rules have the following  form: $$\{A_1\commadots A_n\} \leftarrow B_1, \ldots B_k, \naf C_1, \ldots, \naf C_m. (n\geq 1) {\bf(2)}$$  
As observed by~\cite{OJtplp08},
the heads of choice rules possessing multiple atoms can be freely split without affecting
their semantics. When splitting such rules into n different rules $\{a_i\} \leftarrow B_1, \ldots B_k, \naf C_1, \ldots, \naf C_m$ where $1 \leq i \leq n$, the only concern is the creation of $n$ copies of the rule body $B_1, \ldots B_k, \naf C_1, \ldots, \naf C_m.$ However, new atoms can be introduced to circumvent this. There is a translation of these choice rules to normal logic programs \cite{DBLP:journals/tplp/FerrarisL05}, which we assume is performed throughout this paper but that is omitted for readability. We deal only with ground programs and use variables as syntactic place-holders.

\subsection{Modular Logic Programming}
Modules, in the sense of~\cite{OJtplp08}, are essentially sets of rules with an input and output interface:

\begin{definition}[Program Module]
A logic program module $\mathcal{P}$ is a tuple $\langle R, I, O, H \rangle$ where:
	\begin{enumerate}
		\item $R$ is a finite set of rules;
		\item $I$, $O$, and $H$ are pairwise disjoint sets of input, output, and hidden atoms;
		\item $At(R) \subseteq At(\mathcal{P})$ defined by $At(\mathcal{P}) = I \cup O \cup H$; and
		\item $Head(R) \cap I =\emptyset $.
	\end{enumerate}
\end{definition}

The set of atoms in $At_v(\mathcal{P}) = I \cup O$ are considered to be \emph{visible} and hence accessible to other modules composed with $\mathcal{P}$ either to produce input for $\mathcal{P}$ or to make use of the output of $\mathcal{P}$. We use $At_i(\mathcal{P})=I$ and $At_o(\mathcal{P})=O$ to represent the input and output signatures of $\mathcal{P}$\@, respectively. The hidden atoms in $At_h(\mathcal{P}) = At(\mathcal{P})\backslash At_v(\mathcal{P})= H $ are used to formalize some auxiliary concepts of $\mathcal{P}$ which may not be sensible for other modules but may save space substantially. The condition $head(R) \not\in I$ ensures that a module may not interfere with its own input by defining input atoms of $I$ in terms of its rules. Thus, input atoms are only allowed to appear as conditions in rule bodies.

\begin{example}\label{example:Alice} The use case in Example \ref{ex:usecase} is encoded into the five modules shown here:
\[
\begin{array}{lll}
\mathcal{P}_A=<&\{ &buy(X) \leftarrow car(X), safe(X), \naf exp(X). \\
           && car(c_1). \quad car(c_2). \quad car(c_3). \}, \\
           &\{ &safe(c_1), safe(c_2), safe(c_3),\\
          && exp(c_1),exp(c_2), exp(c_3) \}, \\
            &\{ &buy(c_1), buy(c_2), buy(c_3)\},\\
          &\{ &car(c_1),car(c_2),car(c_3) \} >\\
\mathcal{P}_B=<&\{ &exp(c_2). \}, \{\}, \{ exp(c_2), exp(c_3) \}, \{ \} >\\
\mathcal{P}_C=<& \{ &exp(c_3). \}, \{\},\\
& \{ &exp(c_1), exp(c_2),  exp(c_3) \}, \{ \} >\\
\mathcal{P}_{mg_1}= <&\{ &safe(c_1). \}, \{ \},\\
& \{ &safe(c_1), safe(c_2),  safe(c_3) \}, \{ \} >\\
\mathcal{P}_{mg_2} =<&\{ &safe(X) \leftarrow car(X), airbag(X). \\
           & & car(c_1).\;  car(c_2). \; car(c_3). \;  airbag(c_1). \\
          &  \{ &airbag(c_3) \}. \;\;  \}, \\
           & \{ &\}, \{ safe(c_1), safe(c_2), safe(c_3) \}, \\
           & \{ &airbag(c_1), airbag(c_2), airbag(c_3),\\
         &&car(c_1),car(c_2),car(c_3)\} >\hfill \blacksquare
\end{array}
\]
\end{example}
In Example \ref{example:Alice}, module $\mathcal{P}_A$ encodes the rule used by Alice to decide if a car should be bought. The safe and expensive atoms are its inputs, and the buy atoms its outputs; it uses hidden atoms $car/1$ to represent the domain of variables. Modules $\mathcal{P}_B$, $\mathcal{P}_C$ and $\mathcal{P}_{mg_1}$ capture the factual information in Example~\ref{ex:usecase}. They have no input and no hidden atoms, but $Bob$ has only analyzed the price of cars $c_2$ and $c_3$\@. The ASP program module for the second magazine is more interesting\footnote{$car$ belongs to both hidden signatures of $\mathcal{P}_A$ and $\mathcal{P}_{mg_2}$ which is not allowed when composing these modules, but for clarity we omit a renaming of the $car/1$ predicate.}, and expresses the rule used to determine if a car is safe, namely that a car is safe if it has an airbag; it is known that car $c_1$ has an airbag, $c_2$ does not, and the choice rule states that car $c_3$ may or may  not have an airbag. \\

Next, the stable model semantics is generalized to cover modules by introducing a generalization of the Gelfond-Lifschitz's fixpoint definition. In addition to weekly default literals  (\iec $\naf $), also literals involving input atoms are used in the stability condition.  In ~\cite{OJtplp08}, the stable models of a module are defined as follows:

\begin{definition}[Stable Models of Modules]
An interpretation $M \subseteq At(\mathcal{P})$ is a stable model of an ASP program module $\mathcal{P} = \langle R, I, O,H \rangle$, if and only if $M = LM\left(R^M \cup \{a.|a\in M\cap I\}\right)$. The stable models of $\mathcal{P} $ are denoted by $AS(\mathcal{P})$\@.
\end{definition}

Intuitively, the stable models of a module are obtained from the stable models of the rules part, for each possible combination of the input atoms.

\begin{example}\label{example:3}Program modules $\mathcal{P}_B$, $\mathcal{P}_C$, and $\mathcal{P}_{mg_1}$ have each a single answer set $AS(\mathcal{P}_B)$ = $\{ \{ exp(c_2)\} \}$, $AS(\mathcal{P}_C)$ = $\{ \{ exp(c_3)\}\}$, and $AS(\mathcal{P}_{mg_1})$ = $\{ \{ safe(c_1) \} \}$. Module $\mathcal{P}_{mg_2}$ has two stable models, namely:
$\{ safe(c_1),$ $car(c_1),$ $car(c_2),$ $car(c_3),$ $airbag(c_1)\}$, and
$\{ safe(c_1),$ $safe(c_3),$ $car(c_1),$ $car(c_2),$ $car(c_3),$ $airbag(c_1),$ $airbag(c_3)\}$. 

Alice's ASP program module has $2^6=64$ models corresponding each to an input combination of safe and expensive atoms. Some of these models are:
\[
\begin{array}{lll}
\{ &buy(c_1), car(c_1), car(c_2), car(c_3), safe(c_1) &\}\\
\{ &buy(c_1), buy(c_3), car(c_1), car(c_2), car(c_3),\\ 
&safe(c_1), safe(c_3) &\} \\
\{ &buy(c_1), car(c_1), car(c_2), car(c_3),exp(c_3),\\ 
&safe(c_1), safe(c_3)& \}\hfill \blacksquare
\end{array}
\]
\end{example}

\subsection{Composing programs from models}

The composition of models is obtained from the union of program rules and by constructing the composed output set as the union of modules' output sets, thus removing from the input all the specified output atoms.~\cite{OJtplp08} define their first composition operator as follows: Given two modules $\mathcal{P}_1 = \langle R_1, I_1,O_1,H_1 \rangle$ and $\mathcal{P}_2 = \langle R_2, I_2,O_2,H_2 \rangle$, their composition $\mathcal{P}_1 \oplus \mathcal{P}_2$ is defined when their output signatures are disjoint, that is, $O_1 \cap O_2 = \emptyset$, and they respect each others hidden atoms, \iec $H_1 \cap At(\mathcal{P}_2) = \emptyset$ and $H_2 \cap At(\mathcal{P}_1) = \emptyset$. Then their composition is $$\mathcal{P}_1 \oplus \mathcal{P}_2 = \langle R_1 \cup R_2, (I_1 \backslash O_2) \cup (I_2 \backslash O_1), O_1 \cup O_2,H_1 \cup H_2 \rangle$$ 

However, the conditions given for $\oplus$ are not enough to guarantee compositionality in the case of answer sets and as such they define a restricted form:

\begin{definition}[Module Union Operator $\sqcup$]\label{definition:joinconditions} Given modules $\mathcal{P}_1, \mathcal{P}_2$, their union is $\mathcal{P}_1 \sqcup \mathcal{P}_2 = \mathcal{P}_1 \oplus \mathcal{P}_2$ whenever {\bf(i)} $\mathcal{P}_1 \oplus \mathcal{P}_2$ is defined and {\bf(ii)} $\mathcal{P}_1$ and $\mathcal{P}_2$ are mutually independent\footnote{There are no positive cyclic dependencies among rules in different modules, defined as loops through input and output signatures.}.
\end{definition}

Natural join ($\bowtie$) on visible atoms is used in~\cite{OJtplp08} to combine stable models of modules as follows: 
\begin{definition}[Join]
Given modules $\mathcal{P}_1$ and $\mathcal{P}_2$ and sets of interpretations $A_1 \subseteq 2^{At(\mathcal{P}_1)}$ and $A_2 \subseteq 2^{At(\mathcal{P}_2)}$, the natural join of $A_1$ and $A_2$ is: 
\[
\begin{array}{ll}
A_1 \bowtie A_2 = \{&M_1 \cup M_2 \mid M_1 \in A_1 , M_2 \in A_2 \text{ and }\\
&M_1 \cap At_v(\mathcal{P}_2) = M_2 \cap At_v(\mathcal{P}_ 1)\}
\end{array}
\]
\end{definition}

This leads to their main result, stating that:
\begin{theorem}[Module Theorem]
If $\mathcal{P}_1,\mathcal{P}_2$ are modules such that $\mathcal{P}_1 \sqcup \mathcal{P}_2$ is defined, then $$AS(\mathcal{P}_1 \sqcup \mathcal{P}_2) = AS(\mathcal{P}_1) \bowtie AS(\mathcal{P}_2)$$
\end{theorem}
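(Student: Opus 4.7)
The plan is to prove the set equality by double inclusion, using the natural decomposition $M_i := M \cap At(\mathcal{P}_i)$ for the forward direction and $M := M_1 \cup M_2$ for the backward direction, and reducing the whole argument to the positive case via the Gelfond--Lifschitz reduct. Concretely, stability asks that $M$ equal the least model of the reduct augmented with input atoms as facts, so I will show that this fixpoint property on the union holds precisely when both components satisfy the analogous property locally.

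First I would prove a reduct decomposition lemma. Because every rule of $R_i$ is built from atoms in $At(\mathcal{P}_i)$ and the union conditions enforce $H_i \cap At(\mathcal{P}_{3-i}) = \emptyset$, all negative literals in $R_i$ lie in $At(\mathcal{P}_i)$; hence $R_i^M = R_i^{M_i}$ and $(R_1 \cup R_2)^M = R_1^{M_1} \cup R_2^{M_2}$. The input sets also split cleanly: the disjointness $O_1 \cap O_2 = \emptyset$ together with $I = (I_1 \setminus O_2) \cup (I_2 \setminus O_1)$ implies that the input seen locally by $R_i$ is $(M \cap I) \cup (M \cap (I_i \cap O_{3-i}))$, \iec the global facts plus the cross-module output projected from the other component. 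In the backward direction, the join condition $M_1 \cap At_v(\mathcal{P}_2) = M_2 \cap At_v(\mathcal{P}_1)$ guarantees that the two local views of these shared atoms coincide, so assembling $M = M_1 \cup M_2$ is unambiguous.

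The heart of the proof is a lemma about positive programs built from mutually independent modules. For positive rule sets $R_1, R_2$ whose modules satisfy the mutual independence condition, and any set $X$ of facts over the combined external input, I claim that $LM(R_1 \cup R_2 \cup X)$ decomposes as the join of the local least models $LM(R_i \cup X_i)$ under appropriate input projections $X_i$. The proof proceeds by stratification: the mutual independence hypothesis forbids positive cycles that alternate between modules, so the strongly connected components of the positive dependency graph of $R_1 \cup R_2$ can be topologically ordered in a way that confines each SCC to a single module. By induction on this ordering I can track $T_{R_1 \cup R_2}^{\omega}$ against the parallel iterations of $T_{R_i}^{\omega}$ on each side and show they agree.

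The main obstacle is the backward direction of this lemma: given local least models that already agree on shared atoms, I must rule out that feeding the output of one module as fact-input to the other hides a positive cyclic derivation that the combined computation from scratch would not reproduce. Mutual independence is precisely what closes this gap, and making the reduction rigorous will either require invoking the splitting set theorem of Lifschitz and Turner directly (which the present result generalizes) or replaying its stratification argument step by step on the inter-module SCC ordering. Once the positive lemma is in place, both directions of the theorem follow by composing it with the reduct decomposition, with the stability equation $M = LM((R_1 \cup R_2)^M \cup (M \cap I))$ splitting into the two local stability equations $M_i = LM(R_i^{M_i} \cup (M_i \cap I_i))$.
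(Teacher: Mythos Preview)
The paper does not contain a proof of this theorem: it is quoted as the main result of Oikarinen and Janhunen~\cite{OJtplp08} and serves as background for the paper's own contributions (Theorems~\ref{theorem:newmodule} and~\ref{theorem:conservativemoduletheorem}). So there is nothing here to compare your attempt against.

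That said, your outline is the standard route to this result and matches what Oikarinen and Janhunen do in the cited work. The reduct decomposition $(R_1\cup R_2)^M = R_1^{M_1}\cup R_2^{M_2}$ is correct for the reason you give, and the crux is indeed the positive lemma: that $LM$ of the union equals the join of the local $LM$'s when no positive cycle crosses the module boundary. Your plan to order the inter-module SCCs topologically and induct is exactly the mechanism behind the splitting set theorem, and you are right that one can either invoke splitting directly or replay its stratification argument. One small point worth tightening: in the backward direction you must also verify that the join condition forces $M_1\cap H_2 = M_2\cap H_1 = \emptyset$ (immediate from the hidden-atom disjointness), so that $M_i = (M_1\cup M_2)\cap At(\mathcal{P}_i)$ really recovers the original $M_i$ after forming the union; otherwise the round trip $M_i \mapsto M \mapsto M\cap At(\mathcal{P}_i)$ could pick up stray atoms.
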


Still according to~\cite{OJtplp08}, their module theorem also straightforwardly generalizes for a collection of modules because the module union operator $\sqcup$ is commutative, associative, and \CD{has the identity element $<\emptyset,\emptyset,\emptyset,\emptyset>$}.

\begin{example} \CD{Consider the composition $\mathcal{Q} = \left(\mathcal{P}_A  \sqcup \mathcal{P}_{mg_1} \right) \sqcup \mathcal{P}_{B} $\@. First, we have}
\[
\mathcal{P}_A  \sqcup \mathcal{P}_{mg_1}  = \left<
\begin{array}{ll}
\{ buy(X) \leftarrow car(X), safe(X),&\\ \quad\quad\quad\quad\quad\naf exp(X). &\\
\ \ car(c_1). ~ car(c_2).  ~car(c_3). ~ safe(c_1).\}, &\\
\{  exp(c_1), exp(c_2), exp(c_3) \}, \\
\{ buy(c_1), buy(c_2), buy(c_3),\\ 
\;\;safe(c_1), ~safe(c_2),~ safe(c_3) \},\\
\{ car(c_1),car(c_2),car(c_3) \}\\
\end{array}
\right>\hfill
\] 
\CD{It is immediate to see that the module theorem holds in this case. The visible atoms of $\mathcal{P}_A$ are $safe/1$, $exp/1$ and $buy/1$, and the visible atoms for $\mathcal{P}_{mg_1}$ are $\{safe(c_1), safe(c_2) \}$\@. The only model for $\mathcal{P}_{mg_1} = \{ safe(c_1) \}$ when naturally joined with the models of $\mathcal{P}_A$\@, results in eight possible models where $safe(c_1)$\@, $\naf safe(c_2)$\@, and $\naf safe(c_3)$ hold, and $exp/1$ vary. The final ASP program module $\mathcal{Q}$ is}
\[
\left<
\begin{array}{ll}
\{ buy(X) \leftarrow car(X), safe(X), \naf exp(X). &\\
\ \ car(c_1). ~ car(c_2). ~ car(c_3). ~ exp(c_2). ~ safe(c_1).\}, &\\
\{  exp(c_1) \}, \\
\{ buy(c_1), buy(c_2), buy(c_3), exp(c_2),\\ 
\;\;safe(c_1), safe(c_2), safe(c_3) \},\\
\{ car(c_1),car(c_2),car(c_3) \}\\
\end{array}
\right>
\] 
The stable models of $\mathcal{Q}$ are thus:
\[
\begin{array}{c}
\{ safe(c_1), exp(c_1), exp(c_2), car(c_1), car(c_2), car(c_3)  \}\hfill\\
\{ buy(c_1), safe(c_1), exp(c_2), car(c_1), car(c_2), car(c_3)  \}
\hfill \blacksquare
\end{array}
\]
\end{example}

\subsection{Visible and Modular Equivalence}
The notion of visible equivalence has been introduced in order to neglect hidden atoms when logic programs are compared on the basis of their models. The compositionality property from the module theorem enabled the authors to port this idea to the level of program modules--giving rise to modular equivalence of logic programs. 

\begin{definition}
Given two logic program modules $\mathcal{P}$ and $\mathcal{Q}$, they are:\\
{\bf Visibly equivalent:} $\mathcal{P} \equiv_v \mathcal{Q}$ iff $At_v(\mathcal{P}) = At_v(\mathcal{Q})$ and there is a bijection $f : AS(\mathcal{P}) \rightarrow AS(\mathcal{Q})$ such that for all $M \in AS(\mathcal{P})$, $M \cap At_v(\mathcal{P}) = f(M) \cap At_v(\mathcal{Q})$.\\ 
{\bf Modularly equivalent:} $\mathcal{P} \equiv_m \mathcal{Q}$ iff $At_i(\mathcal{P}) = At_i(\mathcal{Q})$ and $\mathcal{P} \equiv_v Q$.
\end{definition}

So, two modules are visibly equivalent if there is a bijection among their stable models, and they coincide in their visible parts. If additionally, the two program modules have the same input and output atoms, then they are modularly equivalent. 

\subsection{Shortcomings}
The conditions imposed in these definitions bring about some shortcomings such as the fact that the output signatures of two modules must be disjoint which disallows many practical applications \egc we are not able to combine the results of program module $\mathcal{Q}$ with any of $\mathcal{P}_C$ or $\mathcal{P}_{mg_2}$\@, and thus it is impossible to obtain the combination of the five modules. Also because of this, the module union operator $\sqcup$ is not reflexive. By trivially waiving this condition, we immediately get problems with conflicting modules. 
The compatibility criterion for the operator $\bowtie$ also rules out the compositionality of mutually dependent modules, but allows positive loops inside modules or negative loops in general.

\begin{example}[Common Outputs]\label{example:incomplete} 
Given $ \mathcal{P}_B$ and  $\mathcal{P}_C$\@, which respectively have:

 $AS(\mathcal{P}_B$)=$\{\{exp(c_2)\}\}$ and $AS(\mathcal{P}_C)$=$\{\{ exp(c_3)\}\}$\@,\\
the single stable model of their union $AS(\mathcal{P}_B \sqcup \mathcal{P}_C)$ is: 
$$\{exp(c_2), exp(c_3)\}$$
However, the join of their stable models is $AS(\mathcal{P}_B) \bowtie AS(\mathcal{P}_C) = \emptyset$, invalidating the module theorem.
$\hfill \blacksquare$\end{example}

We illustrate next the issue with positive loops between modules.
\begin{example}[Cyclic Dependencies]\label{example:positiveloop} 
Take the following two program modules: 
\[
\begin{array}{c}
\mathcal{P}_1 = \langle \{airbag \leftarrow safe.\}, \{safe\},\{airbag\},\emptyset \rangle\\
\mathcal{P}_2= \langle \{safe \leftarrow airbag.\}, \{airbag\}, \{safe\}, \emptyset\rangle
\end{array}
\]
Their stable models are: 
$$AS(\mathcal{P}_1)=AS(\mathcal{P}_2)=\{\{\},\{airbag,safe\}\}$$ 
while the single stable model of the union $AS(\mathcal{P}_1 \sqcup \mathcal{P}_2$) is the empty model $\{\}$. Therefore $AS(\mathcal{P}_1 \sqcup \mathcal{P}_2) \neq$ $AS(\mathcal{P}_1) \bowtie AS(\mathcal{P}_2)$ = $\{\{\},\{airbag,safe\}\}$, thus also invalidating the module theorem.
$\hfill \blacksquare$\end{example}


\section{Positive Cyclic Dependencies Between Modules}\label{section:cycles}
To attain a generalized form of compositionality we need to be able to deal with the two restrictions identified previously, namely cyclic dependencies between modules. In the literature,~\cite{defk2009-iclp} presents a solution based on a model minimality property. It forces one to check for minimality on every comparable models of all program modules being composed. It is not applicable to our setting though, which can be seen in Example~\ref{example:counter} where logical constant $\bot$ represents value $false$.

\begin{example}[Problem with minimization]\label{example:counter}
Given modules $\mathcal{P}_1 = \langle \{a\leftarrow b. \; \bot \leftarrow not\; b.\},\{b\},\{a\},\{\} \rangle$ with one answer set $\{a,b\}$, and $\mathcal{P}_2 = \langle \{b\leftarrow a. \},\{a\},\{b\},\{\} \rangle$
with stable models $\{\}$ and $\{a,b\}$, their composition has no inputs and no intended stable models while their minimal join contains $\{a,b\}$.
$\hfill \blacksquare$\end{example}

Another possible solution requires the introduction of extra information in the models to be able to detect mutual positive dependencies. This need has been identified before~\cite{SlotaL12:kr} and is left for future work. 

\section{Generalizing Modularity in ASP by Allowing Common Outputs}\label{section:generalising}
After having identified the shortcomings in the literature, we proceed now to seeing how compositionality can be maintained while allowing modules to have common output atoms.
In this section we present two versions of compositions: \CD{(1) A relaxed composition operator ($\uplus$), aiming at maximizing information in the stable models of modules. Unfortunately, we show that this operation is not compositional. (2) A conservative composition operator ($\otimes$), aiming at maximizing compatibility of atoms in the stable models of modules. This version implies redefining the composition operator by resorting to a program transformation but uses the original join operator.}

\subsection{Extra module operations}

First, one requires fundamental operations for renaming atoms in the output signatures of modules with fresh ones:

\begin{definition}[Output renaming]\label{definition:primeTransformation}
Let $\mathcal{P}$ be the program module $\mathcal{P} = \langle R, I,O,H \rangle$\@, $o \in O$ and $o' \not\in At(\mathcal{P})$\@. The renamed output program module $\rho_{o' \leftarrow o}\left(\mathcal{P}\right)$ is the program module $\langle R' \cup \{\bot \leftarrow o', \naf o.\}, I \cup \{o\},\{o'\} \cup (O\setminus \{o\}),H \rangle$\@. The program part $R'$ is constructed by substituting the head of each rule $o \leftarrow Body$ in $R$ by $o' \leftarrow Body$\@. The heads of other rules remain unchanged, as well as the bodies of all rules.
\end{definition}

\CD{Mark that, by making $o$ an input atom, the renaming operation can introduce extra stable models. However, the original stable models can be recovered by selecting the models where $o'$ has exactly the same truth-value of $o$\@. The constraint throws away models where $o'$ holds but not $o$\@. We will abuse notation and denote $\rho_{o_1' \leftarrow o_1}\left( \ldots \left(\rho_{o_n' \leftarrow o_n}(\mathcal{P})\right)\ldots \right)$ by $\rho_{\{o_1',\ldots,o_n'\} \leftarrow \{o_1,\ldots,o_n\}}\left(\mathcal{P}\right)$\@.}

\begin{example}[Renaming]
Recall the module representing Alice's conditions in Example~\ref{example:Alice}.
Its renamed output program module $\rho_{o' \leftarrow o}\left(\mathcal{P}_A\right)$ is the program module:
\[
\begin{array}{ll}
\rho_{o' \leftarrow o}\left(\mathcal{P}_A\right) = < &  \{ buy'(X) \leftarrow car(X), safe(X),\\
	&\quad\quad\quad\quad\quad \naf exp(X). \\
        & \ \ car(c_1). \quad car(c_2). \quad car(c_3). \\
        & \bot \leftarrow buy(X)', \naf buy(X).\}, \\
        &  \{ buy(X), safe(c_1), safe(c_2), safe(c_3),\\ 
        & \;\; exp(c_1), exp(c_2), exp(c_3) \}, \\
        &  \{ buy'(c_1), buy'(c_2), buy'(c_3)\},\\
        &  \{ car(c_1),car(c_2),car(c_3) \} >\hfill\blacksquare
\end{array}
\]
\end{example}

Still before we dwell any deeper in this subject, we define operations useful to project or hide sets of atoms from a module.

\begin{definition}[Hiding and Projecting Atoms]
Let $\mathcal{P}= \langle R, I,O,H \rangle$ be a module and $S$ \CD{an arbitrary set of atoms}\@. If we want to \emph{Hide} (denoted as $\backslash$) $S$ from program module $\mathcal{P}$, we use $\mathcal{P}\backslash S= \langle R \cup \{ \{i\}. \mid i \in I \cap S \}, I \backslash S,O\backslash S, H \cup ( (I \cup O) \cap S) \rangle$.  Dually, we can \emph{Project} (denoted as $\mid$) over $S$ in the following way: $\mathcal{P}\mid_{S}= \langle R \cup \{ \{i\}. \mid i \in I \setminus S \}, I \cap S,O \cap S, H \cup ((I \cup O) \setminus S)\rangle$.
\end{definition}
Both operators \emph{Hide} and \emph{Project} do not change the stable models of the original program, i.e. $AS(\mathcal{P}) = AS(\mathcal{P}\backslash S) = AS(\mathcal{P}_{\mid S})$ but do change the set of visible atoms $At_v(\mathcal{P}\backslash S) = At_v(\mathcal{P})\backslash S$ and  $At_v(\mathcal{P} \mid S) = At_v(\mathcal{P}) \cap S$\@

\subsection{Relaxed Output Composition}
For the reasons presented before, we start by defining a generalized version of the composition operator, by removing the condition enforcing disjointness of the output signatures of the two modules being combined.
 
\begin{definition}[Relaxed Composition]
Given two modules $\mathcal{P}_1 = \langle R_1, I_1,O_1,H_1 \rangle$ and $\mathcal{P}_2 = \langle R_2, I_2,O_2,H_2 \rangle$, their composition $\mathcal{P}_1 \uplus \mathcal{P}_2$ is defined when they respect each others hidden atoms, \iec $H_1 \cap At(\mathcal{P}_2) = \emptyset$ and $H_2 \cap At(\mathcal{P}_1) = \emptyset$. Then their composition is $\mathcal{P}_1 \uplus \mathcal{P}_2 = \langle R_1 \cup R_2, \CD{(I_1 \cup I_2) \backslash (O_1 \cup O_2)}, O_1 \cup O_2,H_1 \cup H_2 \rangle$. \end{definition}

Obviously, the following important properties hold for $\uplus$\@:
\begin{lemma} The relaxed composition operator is reflexive, associative, commutative and has the identity element $<\emptyset,\emptyset,\emptyset,\emptyset>$\@.
\end{lemma}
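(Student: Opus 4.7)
The plan is to verify each of the four properties by unfolding the definition of $\uplus$ and relying only on elementary set-theoretic identities; the construction is component-wise in $R$, $I$, $O$, $H$, and the side condition $H_1 \cap At(\mathcal{P}_2) = \emptyset$, $H_2 \cap At(\mathcal{P}_1) = \emptyset$ is manifestly symmetric.

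First I would handle \emph{commutativity} and the \emph{identity element}, which are essentially immediate. For commutativity, both the side condition and the four output components are symmetric in their two arguments because $\cup$ is commutative (the input component $(I_1 \cup I_2) \setminus (O_1 \cup O_2)$ inherits symmetry from $\cup$). For the identity element $\langle \emptyset, \emptyset, \emptyset, \emptyset \rangle$ the side condition reduces to $\emptyset$ on both sides and is therefore always met; the resulting module is $\langle R \cup \emptyset, (I \cup \emptyset) \setminus (O \cup \emptyset), O, H \rangle = \langle R, I \setminus O, O, H \rangle$, and this collapses to $\mathcal{P}$ because $I \cap O = \emptyset$ by the pairwise-disjointness requirement on the three signatures of any program module.

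For \emph{associativity}, I would compute both $(\mathcal{P}_1 \uplus \mathcal{P}_2) \uplus \mathcal{P}_3$ and $\mathcal{P}_1 \uplus (\mathcal{P}_2 \uplus \mathcal{P}_3)$ and match them component-wise. The rules, outputs, and hidden atoms are each a union of three sets, so associativity of $\cup$ suffices. The input component is the only one requiring care: one obtains $(((I_1 \cup I_2) \setminus (O_1 \cup O_2)) \cup I_3) \setminus (O_1 \cup O_2 \cup O_3)$, which, by a case split on whether an atom $x$ lies in each $I_k$ or $O_k$, simplifies to $(I_1 \cup I_2 \cup I_3) \setminus (O_1 \cup O_2 \cup O_3)$; the opposite parenthesization simplifies to the same set. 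In parallel, I must check that the side conditions on the two sides coincide: using $At(\mathcal{P}_i \uplus \mathcal{P}_j) = At(\mathcal{P}_i) \cup At(\mathcal{P}_j)$, both expand into the same six pairwise conditions $H_i \cap At(\mathcal{P}_j) = \emptyset$ for every ordered pair $i \neq j$ in $\{1,2,3\}$.

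Finally, for \emph{reflexivity}, a short argument suffices: the side condition for $\mathcal{P} \uplus \mathcal{P}$ to be defined forces $H \cap At(\mathcal{P}) = \emptyset$, but $H \subseteq At(\mathcal{P})$, hence necessarily $H = \emptyset$. Then $\mathcal{P} \uplus \mathcal{P} = \langle R, I \setminus O, O, \emptyset \rangle = \mathcal{P}$, again using $I \cap O = \emptyset$; this contrasts with the original $\sqcup$, whose common-output prohibition prevents reflexivity altogether. There is no genuine obstacle anywhere in the argument; the only mildly delicate point is the input simplification in associativity, where one must carefully distinguish $(A \setminus B) \cup C$ from $(A \cup C) \setminus B$, but the outer subtraction by $O_1 \cup O_2 \cup O_3$ erases the difference.
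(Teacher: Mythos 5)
Your verification is correct, and in fact the paper offers no proof at all for this lemma --- it is asserted as obvious --- so the component-wise unfolding you give (associativity of $\cup$ for the rule, output, and hidden components; the case analysis showing $(((I_1 \cup I_2)\setminus(O_1\cup O_2))\cup I_3)\setminus(O_1\cup O_2\cup O_3) = (I_1\cup I_2\cup I_3)\setminus(O_1\cup O_2\cup O_3)$; the identity $At(\mathcal{P}_1 \uplus \mathcal{P}_2)=At(\mathcal{P}_1)\cup At(\mathcal{P}_2)$ used to match the definedness conditions; and $I\setminus O = I$ from pairwise disjointness for the identity element) is exactly the argument the authors are leaving implicit. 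The one place where you should be more critical rather than accommodating is reflexivity: your own computation shows that the side condition $H \cap At(\mathcal{P}) = \emptyset$ forces $H=\emptyset$, so $\mathcal{P} \uplus \mathcal{P}$ is simply \emph{undefined} for any module with nonempty hidden signature. This is precisely the same failure mode the paper invokes earlier to argue that $\sqcup$ is \emph{not} reflexive (there the culprit is the common-output prohibition, here it is the hidden-atom condition), so by the paper's own standard the unconditional claim that $\uplus$ is reflexive is too strong; it holds only for modules with $H=\emptyset$, or under a reading in which a module trivially respects its own hidden atoms. You present this restriction as if it confirmed the claim, when it actually qualifies it --- worth stating explicitly rather than folding into the positive case.
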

Having defined the way to deal with common outputs in the composition of modules, we \CD{would like} to redefine the operator $\bowtie$ for combining the stable models of these modules. However, this is shown here to be impossible.

\begin{lemma}\label{lem:notcompos} The operation $\uplus$ is not compositional, i.e. for any join operation $\bowtie'$, it is not always the case that $AS(\mathcal{P}_1 \uplus \mathcal{P}_2) = AS(\mathcal{P}_1) \bowtie' AS(\mathcal{P}_2)$\@.  
\end{lemma}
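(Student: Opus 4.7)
\textbf{Proof Plan for Lemma~\ref{lem:notcompos}.} The plan is to exhibit two modules $\mathcal{P}$ and $\mathcal{P}'$ that share the same input, output, and hidden signatures \emph{and} the same set of stable models, yet when composed (via $\uplus$) with a common third module $\mathcal{R}$ yield different stable-model sets. Since any hypothetical join $\bowtie'$ would see exactly the same arguments in both cases (the triples of signatures plus the two sets of answer sets), it would be forced to return the same value on both compositions, contradicting the required equality $AS(\mathcal{P}_i \uplus \mathcal{R}) = AS(\mathcal{P}_i) \bowtie' AS(\mathcal{R})$ for $i \in \{1,2\}$.

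The concrete witness will be the classical strong-equivalence counterexample promoted to module form. Let
\[
\mathcal{P}  = \langle \{\, p \leftarrow \naf q. \,\}, \emptyset, \{p,q\}, \emptyset \rangle,\qquad
\mathcal{P}' = \langle \{\, p. \,\}, \emptyset, \{p,q\}, \emptyset \rangle,
\]
and
\[
\mathcal{R} = \langle \{\, q. \,\}, \emptyset, \{q\}, \emptyset \rangle.
\]
First I would verify that $AS(\mathcal{P}) = AS(\mathcal{P}') = \{\{p\}\}$ by straightforward application of the reduct-based definition (the rule $p \leftarrow \naf q$ fires because $q$ is not derivable, giving the same least model as the fact $p$). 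Next I would check that both compositions are well-defined under $\uplus$ — all hidden sets are empty, so the non-interference condition is trivial — and compute their stable models directly from their rule sets: $\mathcal{P} \uplus \mathcal{R}$ has rules $\{p \leftarrow \naf q.\ q.\}$ with unique stable model $\{q\}$, whereas $\mathcal{P}' \uplus \mathcal{R}$ has rules $\{p.\ q.\}$ with unique stable model $\{p,q\}$. Thus $AS(\mathcal{P} \uplus \mathcal{R}) \neq AS(\mathcal{P}' \uplus \mathcal{R})$.

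Finally, I would wrap up the argument by contradiction: assuming some $\bowtie'$ satisfies the compositionality identity for every pair of modules, we would obtain $AS(\mathcal{P}) \bowtie' AS(\mathcal{R}) = AS(\mathcal{P}') \bowtie' AS(\mathcal{R})$ (since both pairs have identical signatures and identical answer-set arguments), yet these two values would have to equal $\{\{q\}\}$ and $\{\{p,q\}\}$ respectively — impossible. I do not expect any substantial obstacle; the only subtlety is making explicit that the join $\bowtie'$ is allowed to depend on the interface signatures of its arguments, but since in our witness $\mathcal{P}$ and $\mathcal{P}'$ have exactly the same $\langle I,O,H\rangle$, this generality is neutralized. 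The moral, which the example makes transparent, is precisely the one foreshadowed in the introduction: stable models alone do not carry enough information (they collapse the strong-equivalence distinction between $p\leftarrow\naf q$ and $p$) to determine behaviour under rule-level union when the composed modules may contribute new support for shared atoms.
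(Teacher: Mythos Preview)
Your proof is correct and follows exactly the same strategy as the paper's: exhibit two modules with identical signatures and identical stable-model sets that, when combined via $\uplus$ with the same third module, produce different stable models, forcing a contradiction for any putative $\bowtie'$. The only difference is the choice of witness: the paper uses $\langle\{a.\},\emptyset,\{a,b\},\emptyset\rangle$ versus $\langle\{a.\ \bot\leftarrow a,b.\},\emptyset,\{a,b\},\emptyset\rangle$ composed with $\langle\{b.\},\emptyset,\{b\},\emptyset\rangle$, whereas you use the classical strong-equivalence pair $p\leftarrow\naf q$ versus $p$; both work for the same reason, and your explicit remark that $\bowtie'$ may depend on signatures (neutralized since the signatures coincide) is a nice extra bit of care.
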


As we have motivated in the introduction, it is important to applications to be able to use $\uplus$ to combine program modules, and retain some form of compositionality. The following definition presents a construction that adds the required information in order to be able to combine program modules using the original natural join.

\begin{definition}[Transformed Relaxed Composition]\label{definition:relaxedTransformation}
Consider the program modules $\mathcal{P}_1 = \langle R_1, I_1,O_1,H_1 \rangle$ and $\mathcal{P}_2 = \langle R_2, I_2,O_2,H_2 \rangle$\@.  Let $O = O_1\cap O_2$\@, and define the sets of newly introduced atoms $O'$=$\{o' \mid o \in O\}$ and $O''$=$\{o'' \mid$ $o \in O\}$\@. Construct program module:
\[
\begin{array}{ll}
\mathcal{P}_{union} = < R_{union}, O' \cup O'', O, \emptyset>\text{ where:}\\
R_{union} = \{ o \leftarrow o'. \mid o' \in O'\} \cup \{ o \leftarrow o''. \mid o'' \in O''\}.
\end{array}
\]
The transformed relaxed composition is defined as the program module
\[
\begin{array}{ll}
(\mathcal{P}_1 \uplus^{RT} \mathcal{P}_2) = &[\rho_{O' \leftarrow O}(\mathcal{P}_1) \sqcup \rho_{O'' \leftarrow O}(\mathcal{P}_2) \sqcup \mathcal{P}_{union}]\; \setminus\\& [O' \cup O''] \hfill
\end{array}
\]
\end{definition}

Intuitively, we rename the common output atoms in the original modules, and introduce an extra program module that unites the contributions of each module by a pair of rules for each common atom $o \leftarrow o'$ and $o \leftarrow o''$\@. We then hide all the auxiliary atoms to obtain the original visible signature. If $O=\emptyset$ then $\mathcal{P}_{union}$ is empty, and all the other modules are not altered, falling back to the original definition.

\begin{theorem}\label{theorem:newmodule}  Let $\mathcal{P}_1$  and $\mathcal{P}_2$ be arbitrary program modules without positive dependencies among them. Then, modules joined with operators $\uplus$ and  $\uplus^{RT}$ are modularly equivalent:
$$\mathcal{P}_1 \uplus \mathcal{P}_2 \equiv_m \mathcal{P}_1 \uplus^{RT} \mathcal{P}_2.$$
\end{theorem}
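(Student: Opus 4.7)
The plan is to prove two things: (i) the visible signatures of both sides coincide, so that modular equivalence is even applicable, and (ii) there is a bijection between $AS(\mathcal{P}_1 \uplus \mathcal{P}_2)$ and $AS(\mathcal{P}_1 \uplus^{RT} \mathcal{P}_2)$ that preserves restrictions to visible atoms. The proof will lean heavily on the fact that, since there are no positive dependencies between $\mathcal{P}_1$ and $\mathcal{P}_2$, the derivation of any atom in a stable model can be attributed to a single module and does not weave across them, which is precisely what the provenance atoms $o'$ and $o''$ in $\uplus^{RT}$ are meant to record.

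For (i), I unfold the construction of $\uplus^{RT}$ step by step. The renamings $\rho_{O' \leftarrow O}(\mathcal{P}_1)$ and $\rho_{O'' \leftarrow O}(\mathcal{P}_2)$ push the common output atoms $O = O_1 \cap O_2$ into fresh outputs $O'$, $O''$ while promoting $O$ to the input side of each renamed module. Joining both with $\sqcup$ leaves $O$ as a common input, disjoint outputs $(O_1\setminus O)\cup O'$ and $(O_2\setminus O)\cup O''$, and no residual interaction between the two renamed modules (indeed, any positive dependency between them would have to thread through $O$, now an input of both). Composing with $\mathcal{P}_{union}$ turns $O$ back into outputs via $o \leftarrow o'$ and $o \leftarrow o''$, and finally hiding $O'\cup O''$ pushes them into the hidden signature. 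Routine set manipulation (using well-formedness $I_i\cap O_i=\emptyset$) then yields
\[
At_i(\mathcal{P}_1 \uplus^{RT} \mathcal{P}_2) = (I_1\cup I_2)\setminus(O_1\cup O_2) = At_i(\mathcal{P}_1 \uplus \mathcal{P}_2),
\]
\[
At_o(\mathcal{P}_1 \uplus^{RT} \mathcal{P}_2) = O_1\cup O_2 = At_o(\mathcal{P}_1 \uplus \mathcal{P}_2),
\]
with the only difference being that $\uplus^{RT}$ also carries $O'\cup O''$ as hidden atoms.

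For (ii), I define a map $f\colon AS(\mathcal{P}_1 \uplus \mathcal{P}_2)\to AS(\mathcal{P}_1 \uplus^{RT} \mathcal{P}_2)$ by
\[
f(M) = M\ \cup\ \{o'\mid o\in M\cap O,\ o\in LM(R_1^{M}\cup\{a.\mid a\in M\cap I_1\})\}
\]
\[
\cup\ \{o''\mid o\in M\cap O,\ o\in LM(R_2^{M}\cup\{a.\mid a\in M\cap I_2\})\},
\]
i.e.\ I tag $o'$ exactly when $o$ is derivable from the $\mathcal{P}_1$-reduct (given $M$'s inputs), and similarly $o''$ from $\mathcal{P}_2$. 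Stability of $f(M)$ then decomposes into four checks: the renamed rules with heads in $O'$ or $O''$ are fired precisely under the conditions recorded in the tags; the rules $o\leftarrow o'$ and $o\leftarrow o''$ of $\mathcal{P}_{union}$ together with the constraints $\bot\leftarrow o',\naf o$ and $\bot\leftarrow o'',\naf o$ force the equivalence $o\in f(M) \Leftrightarrow (o'\in f(M)\ \lor\ o''\in f(M))$; the remaining (non-renamed) rules behave on $f(M)$ exactly as they do on $M$; and minimality in the least-model computation for $f(M)$ follows from minimality of $M$ together with the per-module derivation decomposition. The inverse is $f^{-1}(N)=N\setminus(O'\cup O'')$, and the verification is symmetric: because $o\in N$ holds iff some witness $o'$ or $o''$ is in $N$, and each such witness is supported from $R_1$ or $R_2$ respectively, the set $N\setminus(O'\cup O'')$ coincides with the least model of $(R_1\cup R_2)^{f^{-1}(N)}$ extended with the common input facts.

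The main obstacle will be the minimality step, and this is where the hypothesis of no positive dependencies between the modules is essential: without it, the derivation of a common output $o\in O$ could alternate support between $R_1$ and $R_2$, and then neither $o'$ nor $o''$ alone could provide a well-founded witness, so the provenance tagging in the definition of $f$ would not correspond to the unique minimal justification required by the reduct. A secondary subtlety is that $\mathcal{P}_1$ or $\mathcal{P}_2$ may still contain \emph{internal} positive cycles through atoms of $O$, which can prevent the intermediate $\sqcup$-operations inside the definition of $\uplus^{RT}$ from being literally applicable; I will therefore carry out the stable-model comparison directly on the combined reduct of $\uplus^{RT}$ rather than appealing to the module theorem at each $\sqcup$, using the latter only when its preconditions can be independently verified.
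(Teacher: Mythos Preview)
Your plan is sound in outline, and you correctly flag a subtlety the paper handles only loosely: an internal positive cycle in $\mathcal{P}_1$ among atoms of $O$ becomes, after renaming, a positive cycle between $\rho_{O'\leftarrow O}(\mathcal{P}_1)$ and $\mathcal{P}_{union}$, so invoking the original Module Theorem at each intermediate $\sqcup$ is not automatic. Working directly on the combined reduct is the right response, and it makes your argument more explicit than the paper's, which simply asserts that the syntactic rewriting (replacing rules for $o$ by rules for $o',o''$ plus $o\leftarrow o'$ and $o\leftarrow o''$) preserves stable models up to the fresh atoms.

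However, your definition of $f$ has a genuine bug. The condition $o\in LM\bigl(R_1^{M}\cup\{a.\mid a\in M\cap I_1\}\bigr)$ tests whether $o$ is derivable from $R_1$ \emph{in isolation}, fed only by $\mathcal{P}_1$'s original inputs $I_1$. But in $\uplus^{RT}$ the renamed rule $o'\leftarrow B$ keeps the original body $B$, which may mention atoms of $O$ established via $R_2$ in the composed program; such atoms are not in $I_1$ and need not be derivable from $R_1$ alone. Take $\mathcal{P}_1=\langle\{o\leftarrow q.\},\emptyset,\{o,q\},\emptyset\rangle$ and $\mathcal{P}_2=\langle\{q.\},\emptyset,\{o,q\},\emptyset\rangle$, which have no positive cycle between them. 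Then $M=\{o,q\}$ is the unique stable model of $\mathcal{P}_1\uplus\mathcal{P}_2$ and $N=\{o,q,o',q''\}$ that of $\mathcal{P}_1\uplus^{RT}\mathcal{P}_2$, yet your formula gives $f(M)=\{o,q,q''\}$ since $I_1=\emptyset$ and $LM(\{o\leftarrow q.\})=\emptyset$; the equivalence $o\leftrightarrow(o'\lor o'')$ you assert therefore fails for this $f$. The fix is to tag by \emph{support in $M$} rather than by standalone derivability: put $o'\in f(M)$ iff there is a rule $r\in R_1$ with $Head(r)=o$, $Body^+(r)\subseteq M$ and $Body^-(r)\cap M=\emptyset$, and symmetrically for $o''$. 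Since $M$ is a stable model of $R_1\cup R_2$, every $o\in M\cap O$ is supported by some rule of $R_1$ or of $R_2$, which restores the equivalence and lets your minimality argument (and the inverse $N\mapsto N\setminus(O'\cup O'')$) go through unchanged.
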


The important remark is that according to the original module theorem we have:
$AS(\rho_{O' \leftarrow O}(\mathcal{P}_1)$ $\sqcup$
 $\rho_{O'' \leftarrow O}(\mathcal{P}_2)$ $\sqcup$ 
 $\mathcal{P}_{union})$
 $=$ 
 $AS(\rho_{O' \leftarrow O}(\mathcal{P}_1))$ 
 $\bowtie AS( \rho_{O'' \leftarrow O}(\mathcal{P}_2))$ 
 $\bowtie$ $AS(\mathcal{P}_{union})$.
Therefore, from a semantical point of view, users can always substitute module $\mathcal{P}_1 \uplus \mathcal{P}_2$ by $\mathcal{P}_1 \uplus^{RT} \mathcal{P}_2$\@, which has an extra cost since the models of the renamed program modules may increase. This is, however, essential to regain compositionality.

\begin{example} Considering program modules $\mathcal{Q}_1$ $= <$ $\{a.\quad$ $\bot \leftarrow a,b.\},$ $\emptyset,$ $\{a,b\},$ $\emptyset >$ and $\mathcal{Q}_2 = \left< \{b.\}, \emptyset, \{b\}, \emptyset \right>$,  we have:
\[
\begin{array}{llll}
\rho_{a',b' \leftarrow a,b}(\mathcal{P}_1) = &<& \{&a'. ~~ \bot \leftarrow a', \naf a.\\
&&&\bot \leftarrow b', \naf b.  \},\\ 
&& \{&a,b\}, \{a',b'\}, \emptyset \hfill>\\ 
\rho_{a'',b'' \leftarrow a,b}(\mathcal{P}_2) = &<& \{ &b''. ~~ \bot \leftarrow a'', \naf a. \\
&&&\bot \leftarrow b'', \naf b.\},\\ 
&&\{&a,b\}, \{a'',b''\}, \emptyset \hfill>\\
\mathcal{P}_{union} = &<& \{ &a \leftarrow a'. ~~ a \leftarrow a''.\\
&&&  b \leftarrow b'. ~~ b \leftarrow b''.\},\\ 
&&\{&a',a'',b',b''\}, \{a,b\}, \emptyset \hfill>\\
\rho_{a',b' \leftarrow a,b}(\mathcal{Q}_1) = &<& \{ &a'. ~~ \bot \leftarrow a, b.\\
&&& \bot \leftarrow a', \naf a. \\
&&&\bot \leftarrow b', \naf b.\},\\
&& \{&a,b\}, \{a',b'\}, \emptyset \hfill>\\ 
\rho_{a'',b'' \leftarrow a,b}(\mathcal{Q}_2) = &&&\rho_{a'',b'' \leftarrow a,b}(\mathcal{P}_2)\\
\mathcal{Q}_3 = \mathcal{P}_{union}\\
\end{array}
\]
The stable models of the first two modules are $\{\{a,a'\},\{a,b,a'\}\}$ and $\{\{b,b''\},\{a,b,b''\}\}$\@, respectively. Their join is $\{\{a,b,a',b''\}\}$ and the returned model belongs to $\mathcal{P}_{union}$ (and thus it is compatible), and corresponds to the only intended model $\{a,b\}$ of $\mathcal{P}_1 \uplus \mathcal{P}_2$\@. Note that the stable models of $\mathcal{P}_{union}$ are 16, corresponding to the models of propositional formula $(a \equiv a' \vee a'') \wedge (b \equiv b' \vee b'')$\@. Regarding, the transformed module $\rho_{a',b' \leftarrow a,b}(\mathcal{Q}_1)$ it discards the model $\{a,b,a'\}$, having stable models $\{\{a,a'\}\}$. But now the join is empty, as intended. 
 $\hfill \blacksquare$\end{example}

\subsection{Conservative Output Composition}

In order to preserve the original outer join operator, which is widely used in databases, for the form of composition we introduce next one must redefine the original composition operator ($\oplus$). We do that resorting to a program transformation \st the composition operator remains compositional with respect to the join operator ($\bowtie$). The transformation we present next consists of taking Definition \ref{definition:relaxedTransformation} and adding an extra module to guarantee that only compatible models (models that coincide on the visible part) are retained.

\begin{definition}[Conservative Composition]\label{definition:compatibilityTransformation}
Let $\mathcal{P}_1 = \langle R_1, I_1,O_1,H_1 \rangle$ and $\mathcal{P}_2 = \langle R_2, I_2,O_2,H_2 \rangle$ be modules such that their outputs are disjoint $O = O_1\cap O_2 \neq \emptyset$. 
Let $O'=\{o' \mid o \in O\}$ and $O''=\{o'' \mid o \in O\}$ be sets of newly introduced atoms. 

Construct program modules: 
\[
\begin{array}{ll}
\mathcal{P}_{union} =& < R_{union}, O' \cup O'', O, \emptyset>\text{ where:}\\
R_{union} =& \{ o \leftarrow o'. \mid o' \in O'\} \cup \{ o \leftarrow o''. \mid o'' \in O''\}.\\
\mathcal{P}_{filter} =& < \{\bot \leftarrow o', \snot o''.\; \bot \leftarrow \snot o', o''. \mid o \in O\},\\
& O' \cup O'', \emptyset, \emptyset>
\end{array}
\]

The conservative composition is defined as the program module: $\mathcal{P}_1 \utimes \mathcal{P}_2 = \lbrack( \rho_{O' \leftarrow O}(\mathcal{P}_1) \sqcup \rho_{O'' \leftarrow O}(\mathcal{P}_2) \sqcup \mathcal{P}_{union} \sqcup \mathcal{P}_{filter} \rbrack \setminus \left( O' \cup O''\right)$.
\end{definition}

Note here that each clause not containing atoms that belong to $O_1\cap O_2$ in $\mathcal{P}_1 \cup \mathcal{P}_2$ is included in $\mathcal{P}_1 \utimes \mathcal{P}_2$\@. So, if there are no common output atoms the original union based composition is obtained. Therefore, it is easy to see that this transformational semantics ($\utimes$) is a conservative extension to the existing one ($\oplus$).

\begin{theorem}[Conservative Module Theorem] \label{theorem:conservativemoduletheorem} 
If $\mathcal{P}_1, \mathcal{P}_2$ are modules such that $\mathcal{P}_1 \otimes \mathcal{P}_2$ is defined, then a model $M$ $\in$ $AS(\mathcal{P}_1$ $\otimes$ $\mathcal{P}_2)$ iff $M$ $\cap$ $(At(\mathcal{P}_1)$ $\cup$ $At(\mathcal{P}_2) )$  $\in$ $AS(\mathcal{P}_1)$ $\bowtie AS(\mathcal{P}_2)$\@.
\end{theorem}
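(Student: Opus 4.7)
My plan is to reduce the theorem to the original Module Theorem by unpacking the chain of $\sqcup$ operations in Definition~\ref{definition:compatibilityTransformation} and then showing that the natural join of the stable models of the four ingredient modules collapses, after hiding $O' \cup O''$, to exactly $AS(\mathcal{P}_1) \bowtie AS(\mathcal{P}_2)$.

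First I would verify that $\mathcal{P}_1 \otimes \mathcal{P}_2$ is well-formed with respect to $\sqcup$: the output signatures $O'$, $O''$, $O$, and $\emptyset$ of the four components are pairwise disjoint (using $O_1 \cap O_2 = O$ to conclude that $O_1 \setminus O$ and $O_2 \setminus O$ are disjoint as well), the hidden-atom conditions are preserved by renaming, and crucially no positive cycles are introduced: in $\rho_{O' \leftarrow O}(\mathcal{P}_1)$ and $\rho_{O'' \leftarrow O}(\mathcal{P}_2)$ the common atoms $O$ appear only in bodies (as inputs), the rules of $\mathcal{P}_{union}$ go from $O' \cup O''$ to $O$, and $\mathcal{P}_{filter}$ contributes no rule heads. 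Hence each $\sqcup$ along the chain is defined, and the original Module Theorem yields
\[
AS\bigl(\rho_{O' \leftarrow O}(\mathcal{P}_1) \sqcup \rho_{O'' \leftarrow O}(\mathcal{P}_2) \sqcup \mathcal{P}_{union} \sqcup \mathcal{P}_{filter}\bigr) \;=\; AS(\rho_{O' \leftarrow O}(\mathcal{P}_1)) \bowtie AS(\rho_{O'' \leftarrow O}(\mathcal{P}_2)) \bowtie AS(\mathcal{P}_{union}) \bowtie AS(\mathcal{P}_{filter}).
\]

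The next step is to characterize the pieces. Using the remark after Definition~\ref{definition:primeTransformation}, the stable models of $\rho_{O' \leftarrow O}(\mathcal{P}_i)$ are in bijection with pairs $(M_i, S_i)$ where $M_i \in AS(\mathcal{P}_i)$ with $O$ guessed as inputs, and $S_i = M_i \cap O$ (because the integrity constraints force $o'$ to equal $o$ for every $o \in O$). The stable models of $\mathcal{P}_{union}$, by least-model computation on its positive rules with $O' \cup O''$ treated as inputs, are the models in which $o$ holds exactly when $o' \vee o''$ does. The stable models of $\mathcal{P}_{filter}$ are precisely those input assignments to $O' \cup O''$ where, for each $o \in O$, $o' \leftrightarrow o''$. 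Taking the natural join of these four sets, the filter forces $o' \leftrightarrow o''$, which by the renamed-module constraints forces both to agree with $o$ in each of $\mathcal{P}_1$ and $\mathcal{P}_2$; the union module then fixes $o$ to that common value. Consequently the surviving tuples are exactly pairs $(M_1, M_2) \in AS(\mathcal{P}_1) \times AS(\mathcal{P}_2)$ with $M_1 \cap At_v(\mathcal{P}_2) = M_2 \cap At_v(\mathcal{P}_1)$, together with their auxiliary bookkeeping over $O' \cup O''$.

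Finally, applying the hiding operator $\setminus (O' \cup O'')$ does not alter the stable models but removes the auxiliary atoms from the visible signature, so projecting any $M \in AS(\mathcal{P}_1 \otimes \mathcal{P}_2)$ onto $At(\mathcal{P}_1) \cup At(\mathcal{P}_2)$ yields an element of $AS(\mathcal{P}_1) \bowtie AS(\mathcal{P}_2)$, and conversely every element of the latter is witnessed by a unique such $M$ (with $o'$ and $o''$ set by the filter/union constraints). The principal obstacle I expect is the careful bookkeeping in step 3, namely justifying that in each renamed module the reduct-based stable-model definition really does identify $o$ and $o'$ (respectively $o$ and $o''$) on all surviving models. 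This is where the integrity constraint $\bot \leftarrow o', \naf o$ introduced by $\rho$ together with the rule $o \leftarrow o'$ from $\mathcal{P}_{union}$ must be shown to collectively enforce the equivalence $o \equiv o' \equiv o''$; verifying this via the fixpoint condition $M = LM(R^M \cup \{a.\mid a \in M \cap I\})$ for each component, and then tracking how the join threads these equivalences together, is the technical heart of the argument.
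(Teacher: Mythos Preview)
Your plan is correct and is essentially the paper's own argument carried out in more detail: apply the original Module Theorem to the four-way $\sqcup$ chain, then use $\mathcal{P}_{filter}$ (giving $o' \leftrightarrow o''$) together with $\mathcal{P}_{union}$ (giving $o \leftrightarrow o' \vee o''$) to obtain $o \equiv o' \equiv o''$, which pins the renamed modules back to the original stable models of $\mathcal{P}_1$ and $\mathcal{P}_2$ and hence recovers $AS(\mathcal{P}_1)\bowtie AS(\mathcal{P}_2)$ after hiding $O'\cup O''$.

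One small correction worth making when you write it up: your intermediate claim that the integrity constraint in $\rho$ already ``forces $o'$ to equal $o$'' is too strong. The constraint $\bot \leftarrow o',\ \naf o$ only enforces $o' \to o$; the renamed module standing alone genuinely has extra stable models in which the new input $o$ is true while $o'$ is false (this is exactly what the paper's table of $AS(\rho(\mathcal{P}_{mg_i}))$ illustrates). The missing direction $o \to o'$ comes only from the join with $\mathcal{P}_{union}$ (since $o$ is produced solely by $o\leftarrow o'$ and $o\leftarrow o''$) together with $\mathcal{P}_{filter}$. Your final paragraph already reasons this way, so the fix is just to drop the premature bijection claim in your second step and defer the equivalence $o\equiv o'\equiv o''$ to the point where all four pieces are joined.
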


The above theorem is very similar to the original Module Theorem of Oikarinnen and Janhunen apart from the extra renamed atoms required in  $\mathcal{P}_1 \otimes \mathcal{P}_2$ to obtain compositionality.

\begin{example} Returning to the introductory example, we can conclude that $\mathcal{P}_{mg_1} \otimes \mathcal{P}_{mg_2}$ has only one answer set: $$\{safe(c_1),airbag(c_1),car(c_1), car(c_2), car(c_3)\}$$ since this is the only compatible model between $\mathcal{P}_{mg_1}$ and $\mathcal{P}_{mg_2}$\@. The stable models of 
$\rho(\mathcal{P}_{mg_1})$ and $\rho(\mathcal{P}_{mg_2})$\@, are collected in the table below where compatible models appear in the same row and $car(c_1), car(c_2), car(c_3)$ has been omitted from $AS(\rho(\mathcal{P}_{mg_2}))$. Atom $s$ (respectively $a$) stands for $safe$ (respectively $airbag$)\@.
\[
\begin{array}{|c|c|}\hline
\text{Answer sets of } \rho(\mathcal{P}_{mg_1}) & \text{Answer sets of } \rho(\mathcal{P}_{mg_2})\\\hline
\{s(c_1),s'(c_1)\} &  \{s(c_1),s''(c_1),a(c_1) \} \\\hline
\{s(c_1),s(c_2),s'(c_1)\} & \{s(c_1),s(c_2),s''(c_1),a(c_1) \} \\\hline
\{s(c_1),s(c_3),s'(c_1)\}  & \{s(c_1),s(c_3),s''(c_1),a(c_1)\}\\
& \{s(c_1),s(c_3),s''(c_1),\\ 
&s''(c_3),a(c_1),a(c_3)\}\\\hline
\{s(c_1),s(c_2),s(c_3),& \{s(c_1),s(c_2),s(c_3),\\
s'(c_1)\}& s''(c_1),a(c_1)\}\\
&\{s(c_1),s(c_2),s(c_3),s''(c_1),\\
&s''(c_3),a(c_1),a(c_3),c(c_1)\}
\\\hline
\end{array}
\]
The only compatible model retained after composing with $\mathcal{P}_{union}$ and $\mathcal{P}_{filter}$ is the combination of the stable models in the first row: $$\{s(c_1),s'(c_1), s''(c_1),a(c_1),c(c_1), c(c_2), c(c_3)\}.$$
Naturaly, this corresponds to the intended result if we ignore the $s'$ and $s''$ atoms.
$\hfill \blacksquare$\end{example}
We underline that models of composition $\mathcal{P}_1 \otimes \mathcal{P}_2$ will either contain all atoms $o$, $o'$\@, and $o''$ or none of them, and will only join compatible models from $\mathcal{P}_1$ having $\{o, o'\}$ with models in $\mathcal{P}_2$ having $\{o, o''\}$, or models without atoms in $\{o, o', o''\}$. 

\paragraph{Shortcomings Revisited}
The resulting models of composing modules using the transformation and renaming methods described so far in this Section 4 can be minimised a posteriori following the minimization method described in Section 3.

\subsection{Complexity}
Regarding complexity, checking the existence of $M \in P_1 \oplus P_2$ and $M \in P_1 \uplus^{RT} P_2$ is an NP-complete problem. It is immediate to define a decision algorithm belonging to $\Sigma^p_2$ that checks existence of a stable model of the module composition operators. This is strictly less than the results in the approach of~\cite{defk2009-iclp} where the existence decision problem for propositional theories is NEXP$^{\textrm{NP}}$-complete -- however their approach allows disjunctive rules.

\section{Conclusions and Future Work}
We redefined the necessary operators in order to relax the conditions for combining modules with common atoms in their output signatures. Two alternative solutions are presented, both allowing us to retain compositionality while dealing with a more general setting than before.
\cite{defk2009-iclp} provide an embedding of the original composition operator of Oikarinen and Janhunen into their approach. Since our constructions rely on a transformational approach using operator $\sqcup$ of Oikarinen and Janhunen, by composing both translations, an embedding into \cite{defk2009-iclp} is immediately obtained. It remains to be checked whether the same translation can be used in the presence of positive cycles. \cite{Tasharrofi:2011:SAM:2050784.2050804} take ~\cite{DBLP:journals/jair/JanhunenOTW09} and extend it with an algebra which includes a new operation of feedback (loop) over modules. They have shown that the loop operation adds significant expressive power -- modules can can express all (and only) problems in NP. The other issues remain unsolved though. 


The module theorem has been extended to the general theory of stable models~\cite{journals/tplp/BabbL12}, being applied to non-ground logic programs containing choice rules, the count aggregate, and nested expressions. It is based on the new findings about the relationship between the module theorem and the splitting theorem. It retains the composition condition of disjoint outputs and still forbids positive dependencies between modules. As for disjunctive versions,~\cite{DBLP:journals/jair/JanhunenOTW09} introduced a formal framework for modular programming in the context of DLPs under stable-model semantics. This is based on the notion of DLP-functions, which resort to appropriate input/output interfacing. Similar module concepts have already been studied for the cases of normal logic programs and ASPs and even propositional theories, but the special characteristics of disjunctive rules are properly taken into account in the syntactic and semantic definitions of DLP functions presented therein. In \cite{Gebser:2011:RAS:2010192.2010201}, MLP is used as a basis for Reactive Answer Set Programming, aiming at reasoning about real-time dynamic systems running online in changing environments. 

As future work we can straightforwardly extend these results to probabilistic reasoning with stable models by applying the new module theorem to \cite{DamasioM11}, as well as to DLP functions and general stable models.
An implementation of the framework is also foreseen in order to assess the overhead when compared with the original benchmarks in~\cite{OJtplp08}.
Based on our own preliminary work and results in the literature, we believe that a fully compositional semantics can be attained by resorting to partial interpretations \egc SE-models~\cite{Turner:2003:SEM:986809.986818} for defining program models at the semantic level. It is known that one must include extra information about the support of each atom in the models in order to attain generalized compositionality and SE-models appear to be enough.

\bibliography{NMR2014Modular}

\begin{thebibliography}{}

\bibitem[\protect\citeauthoryear{Babb and Lee}{2012}]{journals/tplp/BabbL12}
Babb, J., and Lee, J.
\newblock 2012.
\newblock Module theorem for the general theory of stable models.
\newblock {\em TPLP} 12(4-5):719--735.

\bibitem[\protect\citeauthoryear{Baral}{2003}]{baral2003}
Baral, C.
\newblock 2003.
\newblock {\em Knowledge Representation, Reasoning, and Declarative Problem
  Solving}.
\newblock Cambridge University Press.

\bibitem[\protect\citeauthoryear{Bugliesi, Lamma, and
  Mello}{1994}]{DBLP:journals/jlp/BugliesiLM94}
Bugliesi, M.; Lamma, E.; and Mello, P.
\newblock 1994.
\newblock Modularity in logic programming.
\newblock {\em J. Log. Program.} 19/20:443--502.

\bibitem[\protect\citeauthoryear{Dam\'{a}sio and Moura}{2011}]{DamasioM11}
Dam\'{a}sio, C.~V., and Moura, J.
\newblock 2011.
\newblock Modularity of p-log programs.
\newblock In {\em Proceedings of the 11th international conference on Logic
  programming and nonmonotonic reasoning}, LPNMR'11,  13--25.
\newblock Berlin, Heidelberg: Springer-Verlag.

\bibitem[\protect\citeauthoryear{Dao-Tran \bgroup et al\mbox.\egroup
  }{2009}]{defk2009-iclp}
Dao-Tran, M.; Eiter, T.; Fink, M.; and Krennwallner, T.
\newblock 2009.
\newblock Modular nonmonotonic logic programming revisited.
\newblock In Hill, P.~M., and Warren, D.~S., eds., {\em ICLP 2009, Pasadena,
  USA, 2009}, volume 5649.

\bibitem[\protect\citeauthoryear{Eiter \bgroup et al\mbox.\egroup
  }{2001}]{Eiter01computingpreferred}
Eiter, T.; Faber, W.; Leone, N.; and Pfeifer, G.
\newblock 2001.
\newblock Computing preferred and weakly preferred answer sets by
  meta-interpretation in answer set programming.
\newblock In {\em Proceedings AAAI 2001 Spring Symposium on Answer Set
  Programming},  45--52.
\newblock AAAI Press.

\bibitem[\protect\citeauthoryear{Ferraris and
  Lifschitz}{2005}]{DBLP:journals/tplp/FerrarisL05}
Ferraris, P., and Lifschitz, V.
\newblock 2005.
\newblock Weight constraints as nested expressions.
\newblock {\em TPLP} 5(1-2):45--74.

\bibitem[\protect\citeauthoryear{Gaifman and Shapiro}{1989}]{GSPOPL89}
Gaifman, H., and Shapiro, E.
\newblock 1989.
\newblock Fully abstract compositional semantics for logic programs.
\newblock In {\em symposium on Principles of programming languages}, POPL,
  134--142.
\newblock New York, NY, USA: ACM.

\bibitem[\protect\citeauthoryear{Gebser \bgroup et al\mbox.\egroup
  }{2011}]{Gebser:2011:RAS:2010192.2010201}
Gebser, M.; Grote, T.; Kaminski, R.; and Schaub, T.
\newblock 2011.
\newblock Reactive answer set programming.
\newblock In {\em Proceedings of the 11th international conference on Logic
  programming and nonmonotonic reasoning}, LPNMR'11,  54--66.
\newblock Berlin, Heidelberg: Springer-Verlag.

\bibitem[\protect\citeauthoryear{Gelfond and
  Lifschitz}{1988}]{Gelfond88thestable}
Gelfond, M., and Lifschitz, V.
\newblock 1988.
\newblock The stable model semantics for logic programming.
\newblock In {\em Proceedings of the 5th International Conference on Logic
  Program.}
\newblock MIT Press.

\bibitem[\protect\citeauthoryear{Giordano and Martelli}{1994}]{Giordano199459}
Giordano, L., and Martelli, A.
\newblock 1994.
\newblock Structuring logic programs: a modal approach.
\newblock {\em The Journal of Logic Programming} 21(2):59 -- 94.

\bibitem[\protect\citeauthoryear{Janhunen \bgroup et al\mbox.\egroup
  }{2009}]{DBLP:journals/jair/JanhunenOTW09}
Janhunen, T.; Oikarinen, E.; Tompits, H.; and Woltran, S.
\newblock 2009.
\newblock Modularity aspects of disjunctive stable models.
\newblock {\em J. Artif. Int. Res.} 35(1):813--857.

\bibitem[\protect\citeauthoryear{J\"arvisalo \bgroup et al\mbox.\egroup
  }{2009}]{JarvisaloOJN:LPNMR09}
J\"arvisalo, M.; Oikarinen, E.; Janhunen, T.; and Niemel\"a, I.
\newblock 2009.
\newblock A module-based framework for multi-language constraint modeling.
\newblock In Erdem, E.; Lin, F.; and Schaub, T., eds., {\em Proceedings of the
  10th International Conference on Logic Programming and Nonmonotonic Reasoning
  (LPNMR 2009)}, volume 5753 of {\em Lecture Notes in Artificial Intelligence},
   155--169.
\newblock Springer.

\bibitem[\protect\citeauthoryear{Lifschitz}{2002}]{570391}
Lifschitz, V.
\newblock 2002.
\newblock Answer set programming and plan generation.
\newblock {\em Artificial Intelligence} 138(1-2):39--54.

\bibitem[\protect\citeauthoryear{Mancarella and
  Pedreschi}{1988}]{DBLP:conf/iclp/MancarellaP88}
Mancarella, P., and Pedreschi, D.
\newblock 1988.
\newblock An algebra of logic programs.
\newblock In {\em ICLP/SLP},  1006--1023.

\bibitem[\protect\citeauthoryear{Marek and
  Truszczynski}{1999}]{Marek99stablemodels}
Marek, V.~W., and Truszczynski, M.
\newblock 1999.
\newblock Stable models and an alternative logic programming paradigm.
\newblock In {\em The Logic Programming Paradigm: a 25-Year Perspective}.

\bibitem[\protect\citeauthoryear{Miller}{1986}]{Miller86atheory}
Miller, D.
\newblock 1986.
\newblock A theory of modules for logic programming.
\newblock In {\em In Symp. Logic Programming},  106--114.

\bibitem[\protect\citeauthoryear{Niemel\"a}{1998}]{Niemela98logicprograms}
Niemel\"a, I.
\newblock 1998.
\newblock Logic programs with stable model semantics as a constraint
  programming paradigm.
\newblock {\em Annals of Mathematics and Artificial Intelligence} 25:72--79.

\bibitem[\protect\citeauthoryear{Oikarinen and Janhunen}{2008}]{OJtplp08}
Oikarinen, E., and Janhunen, T.
\newblock 2008.
\newblock Achieving compositionality of the stable model semantics for smodels
  programs1.
\newblock {\em Theory Pract. Log. Program.} 8(5-6):717--761.

\bibitem[\protect\citeauthoryear{O'Keefe}{1985}]{DBLP:conf/slp/OKeefe85a}
O'Keefe, R.~A.
\newblock 1985.
\newblock Towards an algebra for constructing logic programs.
\newblock In {\em SLP},  152--160.

\bibitem[\protect\citeauthoryear{Slota and Leite}{2012}]{SlotaL12:kr}
Slota, M., and Leite, J.
\newblock 2012.
\newblock Robust equivalence models for semantic updates of answer-set
  programs.
\newblock In Brewka, G.; Eiter, T.; and McIlraith, S.~A., eds., {\em Proc. of
  KR 2012}.
\newblock AAAI Press.

\bibitem[\protect\citeauthoryear{Tasharrofi and
  Ternovska}{2011}]{Tasharrofi:2011:SAM:2050784.2050804}
Tasharrofi, S., and Ternovska, E.
\newblock 2011.
\newblock A semantic account for modularity in multi-language modelling of
  search problems.
\newblock In {\em Proceedings of the 8th international conference on Frontiers
  of combining systems}, FroCoS'11,  259--274.
\newblock Berlin, Heidelberg: Springer-Verlag.

\bibitem[\protect\citeauthoryear{Turner}{2003}]{Turner:2003:SEM:986809.986818}
Turner, H.
\newblock 2003.
\newblock Strong equivalence made easy: nested expressions and weight
  constraints.
\newblock {\em Theory and Practice of Logic Programming} 3(4):609--622.

\end{thebibliography}
\bibliographystyle{aaai}

\appendix
\section{Proofs}\label{sec:compsemantics}

\begin{proof}[Lemma~\ref{lem:notcompos}]
A join operation is a function mapping a pair of sets of interpretations into a set of interpretations. Consider the following program modules:
\[
\begin{array}{ll}
\mathcal{P}_1 = < \{a.\}, \emptyset, \{a,b\}, \emptyset > & \mathcal{Q}_1 = < \{a.~~ \bot \leftarrow a,b.\},\\
& \quad\quad\quad \emptyset, \{a,b\}, \emptyset >\\
\mathcal{P}_2 = \left< \{b.\}, \emptyset, \{b\}, \emptyset \right> & \mathcal{Q}_2 = \left< \{b.\}, \emptyset, \{b\}, \emptyset \right>\\
\mathcal{P}_1 \uplus \mathcal{P}_2 = < \{a.~~ b.\}, \emptyset, & \mathcal{Q}_1 \uplus \mathcal{Q}_2 = < \{a.~~ \bot \leftarrow a,b.\\ 
\quad\quad\quad \{a,b\}, \emptyset >&\quad\quad\quad b.\}, \emptyset, \{a,b\}, \emptyset >\\ 
\end{array}
\]
One sees that $AS(\mathcal{P}_1)=AS(\mathcal{Q}_1)=\{\{a\}\}$\@, and $AS(\mathcal{P}_2)=AS(\mathcal{Q}_2)=\{\{b\}\}$ but $AS(\mathcal{P}_1 \uplus \mathcal{P}_2)= \{\{a,b\}\}$ while $AS(\mathcal{Q}_1 \uplus \mathcal{Q}_2)= \{\}$\@. Therefore, it cannot exist $\bowtie'$ since this would require 
$AS(\mathcal{P}_1 \uplus \mathcal{P}_2) = AS(\mathcal{P}_1) \bowtie' AS(\mathcal{P}_2) = \{\{a\}\} \bowtie' \{\{b\}\} = AS(\mathcal{Q}_1) \bowtie' AS(\mathcal{Q}_2) = AS(\mathcal{Q}_1 \uplus \mathcal{Q}_2)$\@, a contradiction. \qed
\end{proof}

\begin{proof}[Theorem \ref{theorem:newmodule}]
By reduction of the conditions of the theorem to the conditions necessary for applying the original Module Theorem.
If $\mathcal{P}_1 \uplus \mathcal{P}_2$ is defined then let their transformed relaxed composition be $T = (\mathcal{P}_1 \uplus^{RT} \mathcal{P}_2)$. It is clear that the output atoms of $T$ are $O_1 \cup O_2$, the input atoms are $(I_1 \cup I_2) \setminus (O_1 \cup O_2)$, and the hidden atoms are $H_1 \cup H_2 \cup O' \cup O''$\@. Note that before the application of the hiding operator the output atoms are $O_1 \cup O_2 \cup O' \cup O''$\@. The original composition operator $\sqcup$ can be applied since the outputs of $\rho_{O' \leftarrow O}(\mathcal{P}_1)$, $\rho_{O'' \leftarrow O}(\mathcal{P}_2)$ and $\mathcal{P}_{union}$ are respectively $O' \cup (O_1\setminus O)$, $O'' \cup (O_2 \setminus O)$ and $O=O_1\cap O_2$, which are pairwise disjoint.
Because of this, we are in the conditions of the original Module Theorem and thus it is applicable to the result of the modified composition $\uplus$ iff the transformation did not introduce positive loops between the program parts of the three auxiliary models. If $\mathcal{P}_1 \uplus \mathcal{P}_2$ had no loops between the common output atoms than its transformation $\mathcal{P}_1 \uplus^{RT} \mathcal{P}_2$ also does not because it results from a renaming into new atoms.

Consider now the rules part of $T$\@; if we ignore the extra introduced atoms in $O'$ and $O''$ the program obtained has exactly the same stable models of the union of program parts of $\mathcal{P}_1$ and $\mathcal{P}_2$\@. Basically, we are substituting the union of $o \leftarrow Body^1_1., \ldots, o \leftarrow Body^1_m.$ in $\mathcal{P}_1$\@, and $o \leftarrow Body^2_1., \ldots, o \leftarrow Body^2_n.$ in $\mathcal{P}_2$ by:
\[
\begin{array}{l@{\qquad}l}
o \leftarrow o'. & o \leftarrow o''.\\
o' \leftarrow Body^1_1. & o'' \leftarrow Body^2_1.\\
\hdots & \hdots\\
 o' \leftarrow Body^1_m. & o'' \leftarrow Body^2_n.\\
 \bot \leftarrow o', \naf o. &  \bot \leftarrow o'', \naf o.\\
\end{array} 
\]
This guarantees visible equivalence of $\mathcal{P}_1 \uplus \mathcal{P}_2$ and  $\mathcal{P}_1 \uplus^{RT} \mathcal{P}_2$\@, since the models of each combined modules are in one-to-one correspondence, and they coincide in the visible atoms. The contribution of the common output atoms is recovered by the joins involving atoms in $O'$, $O''$ and $O$\@, that are all pairwise disjoint, and ensuring that stable models obey to $o = o' \vee o''$ via program module $\mathcal{P}_{union}$\@. The constraints introduced in the transformed models $\rho_{O' \leftarrow O}(\mathcal{P}_1)$ (resp. $\rho_{O'' \leftarrow O}(\mathcal{P}_2))$ simply prune models that have $o$ false and $o'$ (resp. $o''$) true, reducing the number of models necessary to consider. Since the input and output atoms of $\mathcal{P}_1 \uplus \mathcal{P}_2$ and  $\mathcal{P}_1 \uplus^{RT} \mathcal{P}_2$ are the same, then $\mathcal{P}_1 \uplus \mathcal{P}_2 \equiv_m \mathcal{P}_1 \uplus^{RT} \mathcal{P}_2 $\@. 
\qed\end{proof}

\begin{proof}[Theorem~\ref{theorem:conservativemoduletheorem}]The theorem states that if we ignore the renamed literals in $\otimes$ the models are exactly the same, as expected. The transformed program module $\mathcal{P}_1 \utimes \mathcal{P}_2$ corresponds basically to the union of programs, as seen before. Consider a common output atom $o$\@. The constraints in the module part $\mathcal{P}_{filter}$ combined with the rules in $\mathcal{P}_{union}$ restrict the models to the cases for which $o \equiv o' \equiv o''$\@. The equivalence $o \equiv o'$ restricts the stable models of $\rho_{o' \leftarrow o}(\mathcal{P}_1)$ to the original stable models (except for the extra atom $o'$) of $\mathcal{P}_1$, and similarly the equivalence $o \equiv o''$ filters the stable models of $\rho_{o'' \leftarrow o}(\mathcal{P}_2)$ obtaining the original stable models of  $\mathcal{P}_2$\@. Now it is immediate to see that compositionality is retained by making  the original common atoms $o$ compatible\@.
\qed\end{proof}

\end{document}